\definecolor{mydarkblue}{rgb}{0,0.08,0.45}
\newcommand{\argmin}{\mathop{\mathrm{argmin}}\limits}
\newtheorem{theorem}{Theorem}[section]
\newcommand{\prior}[1]{p \left( #1 \right)} 
\newcommand{\param}{\mathrm{w}} 
\newcommand{\paramFixed}{\param} 
\newcommand{\hyper}{\lambda} 
\newcommand{\hyperFixed}{\hyper} 
\newcommand{\hyperDist}{\prior{\hyper}} 
\newcommand{\innerOptParam}[1]{\param^{*} \! \left( #1 \right)} 
\newcommand{\optParam}[1]{\param^{**}}
\newcommand{\optHyper}[1]{\hyper^{*}}
\newcommand{\lossSymbol}{\mathop{\mathcal{L}}} 
\newcommand{\lossSymbolInner}{\lossSymbol_{\mathrm{Train}}} 
\newcommand{\lossSymbolOuter}{\lossSymbol_{\mathrm{Valid.}}} 
\newcommand{\innerLoss}[2]{\lossSymbolInner \! \left( #1, #2 \right)} 
\newcommand{\innerOpt}{\argmin_{\param} \innerLoss{\param}{\hyper}} 
\newcommand{\outerLoss}[1]{\lossSymbolOuter \! \left( #1 \right)}
\newcommand{\outerOpt}[1]{\argmin_{\hyper} \outerLoss{#1}} 
\newcommand{\predictionLoss}[2]{\lossSymbol_{\mathrm{Pred}} ( #1, #2 )} 
\newcommand{\regLoss}[2]{\lossSymbol_{\mathrm{Reg}} ( #1, #2 )}
\newcommand{\outerUpdateSymbol}{\textrm{hyperopt}} 
\newcommand{\outerUpdate}[1]{\outerUpdateSymbol \! \left( #1 \right)} 
\newcommand{\variableData}{\bf{x}} 
\newcommand{\ETrain}[1]{\mathop{\mathbb{E}}_{\variableData \sim \mathrm{Train}} \! \left[ #1 \right]}
\newcommand{\EValid}[1]{\mathop{\mathbb{E}}_{\variableData \sim \mathrm{Valid.}} \! \left[ #1 \right]}
\newcommand{\innerLossEExpand}[2]{\ETrain{\predictionLoss{\variableData}{#1}} + \regLoss{#1}{#2}} 
\newcommand{\outerLossEExpand}[1]{\EValid{\predictionLoss{\variableData}{#1}}} 
\newcommand{\outerIter}{T_{\mathrm{outer}}} 
\newcommand{\responseParam}{\phi} 
\newcommand{\responseParamFixed}{\responseParam} 
\newcommand{\approxResponseSymbol}[1]{\param_{#1}} 
\newcommand{\approxResponse}[2]{\approxResponseSymbol{#2} ( #1 )} 
\newcommand{\sampleRename}[1]{#1} 
\newcommand{\curRename}[1]{\smash{\hat{#1}}} 
\newcommand{\hyperDistVar}{p ( \sampleRename{\hyper} | \curRename{\hyper} )} 
\newcommand{\argminTargetFix}{\responseParam}
\newcommand{\approxResponseOutput}[1]{\approxResponseSymbol{\responseParam^{*}} ( #1 )}
\newcommand{\proofLoss}{\innerLoss{\approxResponse{\hyper}{\responseParam}}{\hyper}}
\newcommand{\proofTargetLossOutput}{\outerLoss{\approxResponseOutput{\hyper}}}
\newcommand{\targetLoss}{\outerLoss{\innerOptParam{\hyper}}}
\newcommand{\phyper}{p \left( \hyper \right)}
\newcommand{\hyperSupport}{\mathrm{support} \! \left( \phyper \right)} 
\newcommand{\hyperDomain}{\textnormal{for all } \hyper \in \hyperSupport}
\newcommand{\Ehyper}[1]{\mathop{\mathbb{E}}_{\phyper} \! \left[ #1 \right]}
\newcommand{\rename}[1]{#1'}
\newcommand{\EhyperFix}[1]{\mathop{\mathbb{E}}_{p \left( \rename{\hyper} \right)} \! \left[ #1 \right]}
\newcommand{\proofLossFix}{\innerLoss{\approxResponse{\rename{\hyper}}{\responseParam}}{\rename{\hyper}}}
\newcommand{\lossTrainData}[2]{\lossSymbolInner ( #1, #2, \variableData )} 
\newcommand{\lossValidData}[1]{\lossSymbolOuter ( #1, \variableData)} 
\newcommand{\lossValidDataChange}[2]{\lossSymbolOuter ( #1, #2)}
\begin{document}
\twocolumn[
\icmltitle{Stochastic Hyperparameter Optimization through Hypernetworks}
\begin{icmlauthorlist}
\icmlauthor{Jonathan Lorraine}{to}
\icmlauthor{David Duvenaud}{to}
\end{icmlauthorlist}
\icmlaffiliation{to}{Department of Computer Science, University of Toronto, Toronto, Canada}
\icmlcorrespondingauthor{Jonathan Lorraine}{lorraine@cs.toronto.edu}
\icmlkeywords{Machine Learning, Meta-learning}
]
\printAffiliationsAndNotice{}

\begin{abstract}
Machine learning models are often tuned by nesting optimization of model weights inside the optimization of hyperparameters.
We give a method to collapse this nested optimization into joint stochastic optimization of weights and hyperparameters.
Our process trains a neural network to output approximately optimal weights as a function of hyperparameters.
We show that our technique converges to locally optimal weights and hyperparameters for sufficiently large hypernetworks.
We compare this method to standard hyperparameter optimization strategies and demonstrate its effectiveness for tuning thousands of hyperparameters.
\end{abstract}

\section{Introduction}
Model selection and hyperparameter tuning is a significant bottleneck in designing predictive models.
Hyperparameter optimization is a nested optimization:
The inner optimization finds model parameters $\param$ which minimize the training loss $\lossSymbolInner$ given hyperparameters $\hyper$.
The outer optimization chooses $\hyper$ to reduce a validation loss $\lossSymbolOuter$:
\begin{figure}
\centering
\begin{tikzpicture}
    \node[shape=circle,draw=black] (optParams) at (-0.5,0.1) {$\alpha$};
    \node[shape=circle,draw=black, fill=gray] (x) at (-0.5,1) {$x$};
    \node[shape=circle,draw=black, fill=gray] (t) at (-0.5,2) {$t$};
    \node[shape=circle,draw=black] (lambda) at (1,-0.7) {$\hyper$};
    \node[shape=rectangle,draw=black] (train) at (1,0.1) {Training};
    \node[shape=circle,draw=black] (w) at (1,1) {$\param^{*}$};
    \node[shape=circle,draw=black] (y) at (1,2) {$y$};
    \node[shape=rectangle,draw=black] (L) at (1,2.8) {$\lossSymbolOuter$};
    
    \path [->] (optParams) edge node {} (train);
    \path [->] (lambda) edge node {} (train);
    \path [->] (train) edge node {} (w);
    \path [->] (x) edge node {} (y);
    \path [->] (w) edge node {} (y);
    \path [->] (t) edge node {} (L);
    \path [->] (y) edge node {} (L);
    
    \node[shape=circle,draw=black] (phi) at (2.5,0.1) {{\color{red}$\responseParam$}};
    \node[shape=circle,draw=black, fill=gray] (xNEW) at (2.5,1) {$x$};
    \node[shape=circle,draw=black, fill=gray] (tNEW) at (2.5,2) {$t$};
    \node[shape=circle,draw=black] (lambdaNEW) at (4.5,-0.7) {$\hyper$};
    \node[shape=rectangle,draw=black] (hypernet) at (4.5,0.1) {{\color{red}Hypernetwork}};
    \node[shape=circle,draw=black] (wNEW) at (4.5,1) {$\param^{*}$};
    \node[shape=circle,draw=black] (yNEW) at (4.5,2) {$y$};
    \node[shape=rectangle,draw=black] (LNEW) at (4.5,2.8) {$\lossSymbolOuter$};
    
    \path [->] (phi) edge node {} (hypernet);
    \path [->] (lambdaNEW) edge node {} (hypernet);
    \path [->] (hypernet) edge node {} (wNEW);
    \path [->] (xNEW) edge node {} (yNEW);
    \path [->] (wNEW) edge node {} (yNEW);
    \path [->] (tNEW) edge node {} (LNEW);
    \path [->] (yNEW) edge node {} (LNEW);
    \node[above] at (current bounding box.north) {\,\,\,\,\,\,\,\,Cross-validation \,\,\,\,\,\,\,\,\,\,\,\,\,\,\,\,\,\,\,\,\,Hyper-training};
\end{tikzpicture}
\caption{
\emph{Left:} A typical computational graph for cross-validation, where $\alpha$ are the optimizer parameters, and $\hyper$ are training loss hyperparameters.
It is expensive to differentiate through the entire training procedure.
\emph{Right:} The proposed computational graph with our changes in {\color{red} red}, where $\responseParam$ are the hypernetwork parameters.
We can cheaply differentiate through the hypernetwork to optimize the validation loss $\lossSymbolOuter$ with respect to hyperparameters $\hyper$.
We use $x$, $t$, and $y$ to refer to a data point, its label, and a prediction respectively.}
\end{figure}
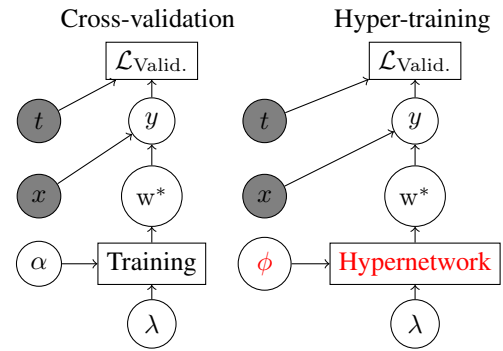
\begin{figure}
	\centering
	\includegraphics[width=0.42\textwidth]{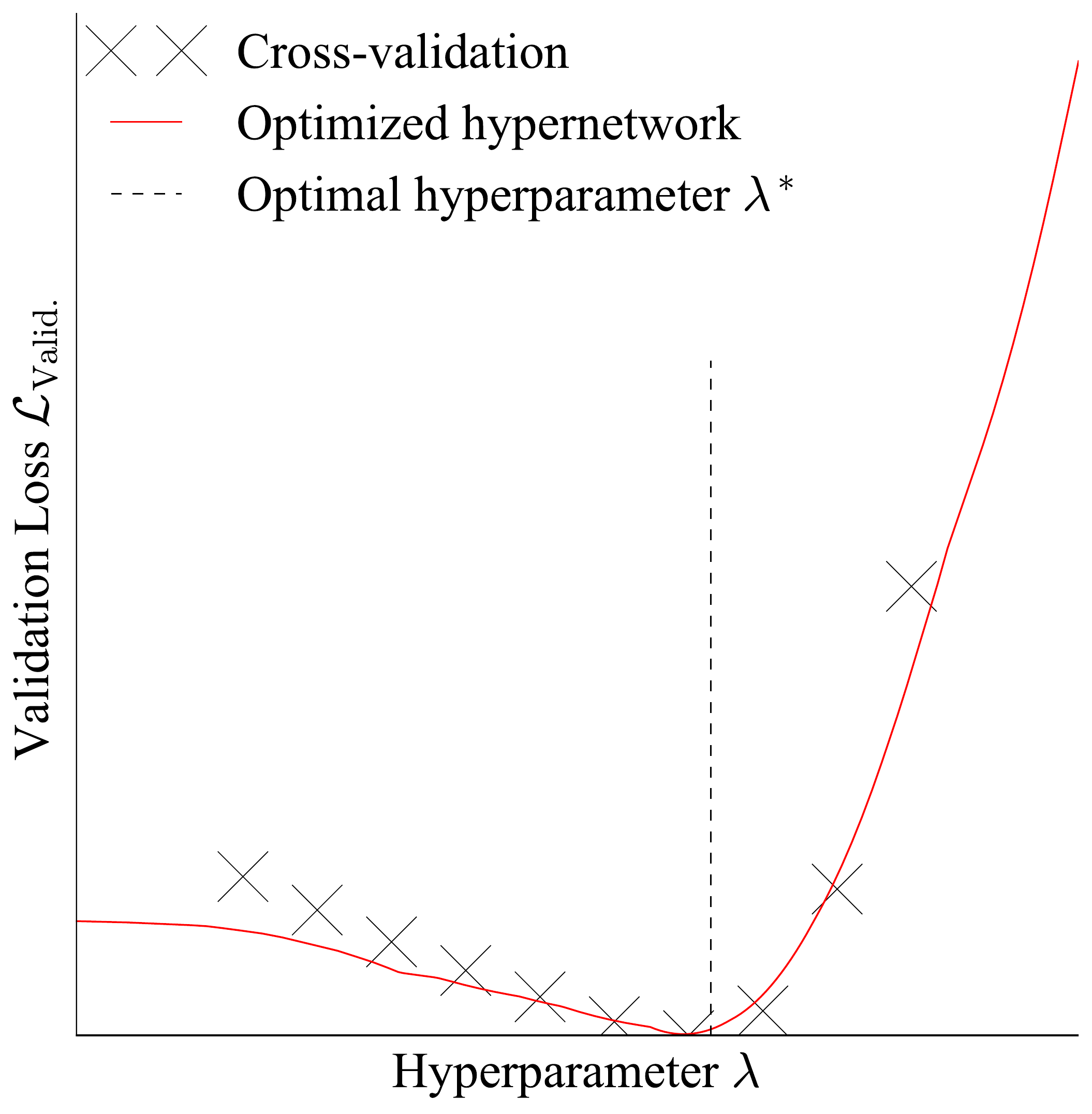}
	\caption{
	The validation loss of a neural net, estimated by cross-validation (crosses) or by a hypernetwork (line), which outputs $7,850$-dimensional network weights.
	Cross-validation requires optimizing from scratch each time.
	The hypernetwork can be used to evaluate the validation loss cheaply.
	\label{fig:exp1}
	}
\end{figure}
\begin{align}
\outerOpt{\innerOpt}
\label{eq nested}
\end{align}
Standard practice in machine learning solves~\eqref{eq nested} by gradient-free optimization of hyperparameters, such as grid search or random search.
Each set of hyperparameters is evaluated by re-initializing weights and training the model to completion.
Re-training a model from scratch is wasteful if the hyperparameters change by a small amount.
Some approaches, such as Hyperband~\citep{li2016hyperband} and freeze-thaw Bayesian optimization~\citep{swersky2014freeze}, resume model training and do not waste this effort.
However, these methods often scale poorly beyond 10 to 20 dimensions.

How can we avoid re-training from scratch each time?
Note that the optimal parameters $\param$ are a deterministic function of the hyperparameters $\hyper$:
\begin{align}
\innerOptParam{\hyper} = \innerOpt
\label{best response equation}
\end{align}
We propose to \emph{learn this function}.
Specifically, we train a neural network that takes hyperparameters as input, and outputs an approximately optimal set of weights.

This formulation provides two major benefits:
First, we can train the hypernetwork to convergence using stochastic gradient descent (SGD) without training any particular model to completion.
Second, differentiating through the hypernetwork allows us to optimize hyperparameters with stochastic gradient-based optimization.
%

\begin{figure*}
	\centering
	\begin{minipage}{1.0\textwidth}
	\begin{tabular}{c|l}
\phantom{aaaaaaaaaaaaaa} Training loss surface & \phantom{aaaaaaaa} Validation loss surface\\
\includegraphics[width=0.5\textwidth, clip, trim=10mm 0mm 0mm 15mm]{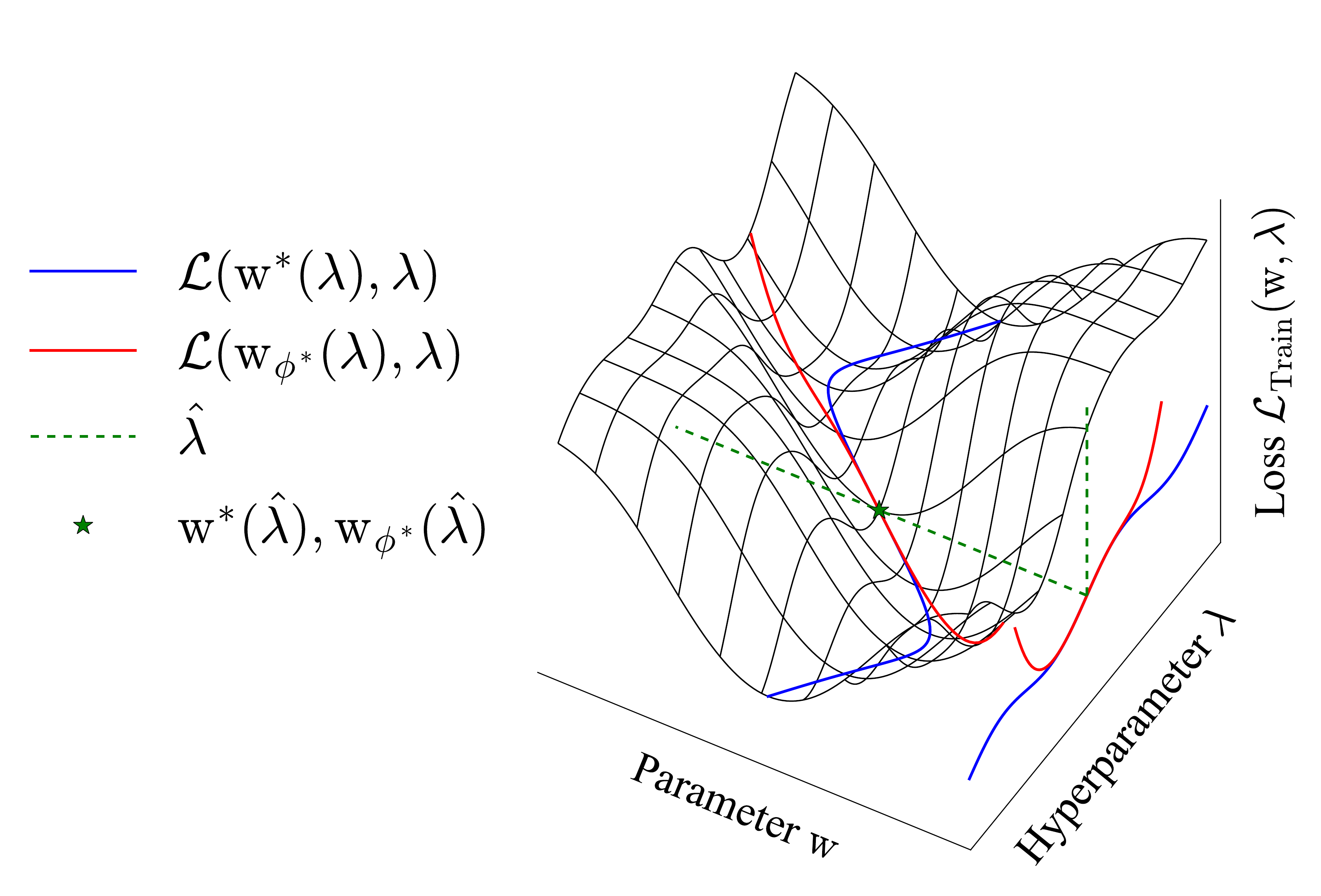} &
\includegraphics[width=0.5\textwidth, clip, trim=35mm 0mm 0mm 18mm]{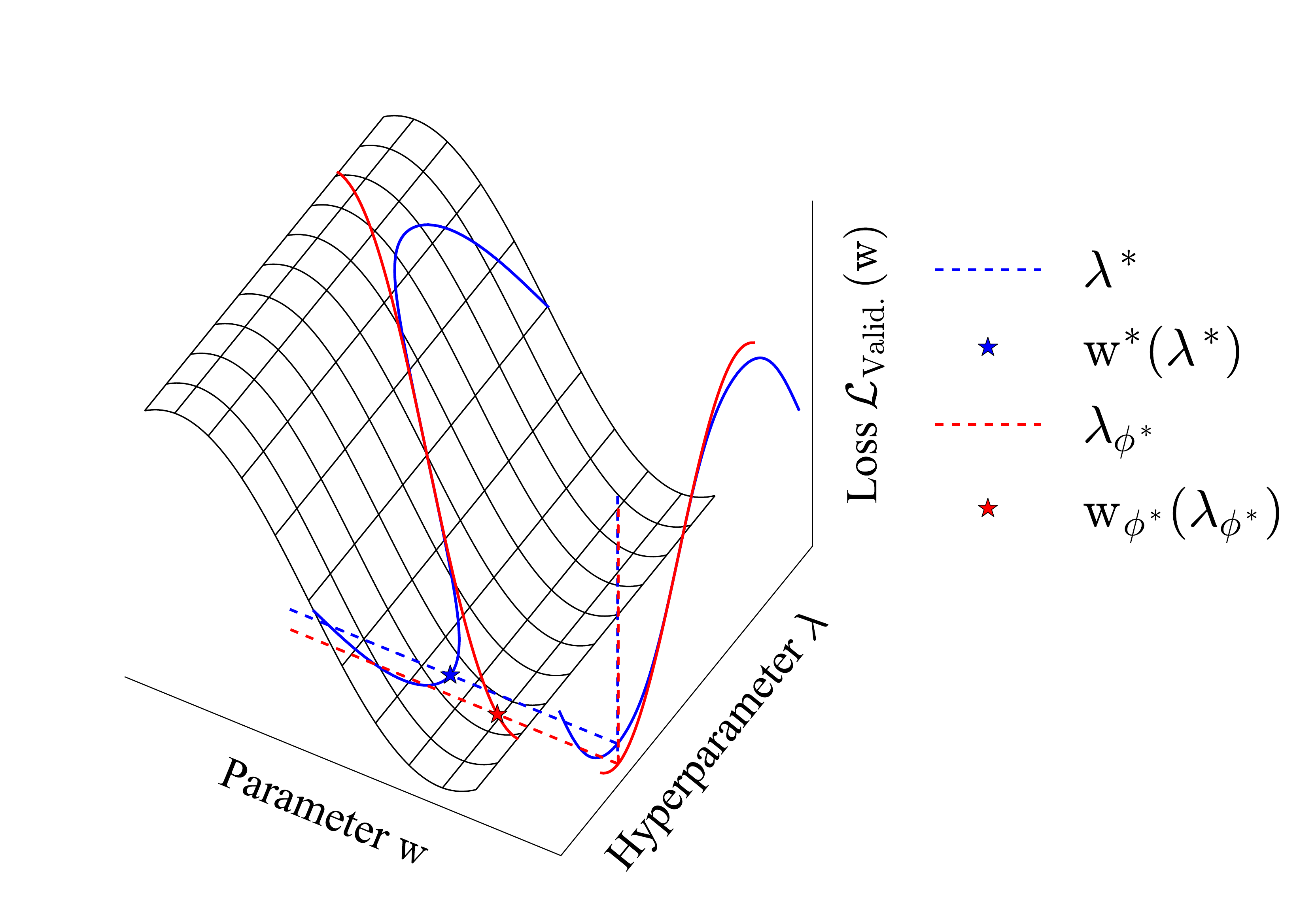}
\end{tabular}
\end{minipage}
\caption{
A visualization of exact (blue) and approximate (red) optimal weights as a function of hyperparameters.
The approximately optimal weights $\param_{\phi^{*}}$ are output by a linear model fit at $\curRename{\lambda}$.
The true optimal hyperparameter is $\lambda^{*}$, while the hyperparameter estimated using approximately optimal weights is nearby at $\lambda_{\phi^{*}}$.
\label{fig:theory1}
}
\end{figure*}

\section{Training a network to output optimal weights}
\label{sec.new_formulation}
How can we teach a \emph{hypernetwork}~\citep{ha2016hypernetworks} to output approximately optimal weights to another neural network?
The basic idea is that at each iteration, we ask a hypernetwork to output a set of weights given some hyperparameters:
$\approxResponseSymbol{} = \approxResponse{\hyperFixed}{\responseParamFixed}$.
Instead of updating the weights $\param$ using the training loss gradient $\nicefrac{\partial \lossSymbolInner (\param)}{\partial \param}$, we update the hypernetwork weights $\responseParamFixed$ using the chain rule: $\frac{\partial \lossSymbolInner(\param_\responseParamFixed)}{\partial \param_\responseParamFixed} \frac{\partial \param_\responseParamFixed}{\partial \responseParamFixed}$.
This formulation allows us to optimize the hyperparameters $\hyper$ with the validation loss gradient $\frac{\partial \lossSymbolOuter(\param_\responseParamFixed (\hyper ))}{\partial \param_\responseParamFixed (\hyper )} \frac{\partial \param_\responseParamFixed (\hyper )}{\partial \hyper}$.
We call this method \emph{hyper-training} and contrast it with standard training methods.
%

We call the function $\innerOptParam{\hyper}$ that outputs optimal weights for hyperparameters a \emph{best-response function}.
At convergence, we want our hypernetwork $\approxResponse{\hyperFixed}{\responseParamFixed}$ to match the best-response function closely.

Our method is closely related to the concurrent work of \citet{brock2017smash}, whose SMASH algorithm also approximates the optimal weights as a function of model architectures, to perform a gradient-free search over discrete model structures.
Their work focuses on efficiently estimating the performance of a variety of model architectures, while we focus on efficiently exploring continuous spaces of models.
We further extend this idea by formulating an algorithm to optimize the hypernetwork and hyperparameters jointly.
Joint optimization of parameters and hyperparameters addresses one of the main weaknesses of SMASH, which is that the the hypernetwork must be very large to learn approximately optimal weights for many different settings.
During joint optimization, the hypernetwork need only model approximately optimal weights for the neighborhood around the current hyperparameters, allowing us to use even linear hypernetworks.


\subsection{Advantages of hypernetwork-based optimization}
Hyper-training is a method to learn a mapping from hyperparameters to validation loss which is differentiable and cheap to evaluate.
We can compare hyper-training to other model-based hyperparameter schemes.
Bayesian optimization (e.g., \citet{lizotte2008practical, snoek2012practical}) builds a model of the validation loss as a function of hyperparameters, usually using a Gaussian process (e.g., \citet{rasmussen2006gaussian}) to track uncertainty.
This approach has several disadvantages compared to hyper-training.

First, obtaining data for standard Bayesian optimization requires optimizing models from initialization for each set of hyperparameters.
In contrast, hyper-training never needs to optimize any one model fully removing choices like how many models to train and for how long.

Second, standard Bayesian optimization treats the validation loss as a black-box function: ${\mathcal{\hat \lossSymbolOuter(\lambda)} = f(\lambda)}$.
In contrast, hyper-training takes advantage of the fact that the validation loss is a known, differentiable function: ${\mathcal{\hat \lossSymbolOuter}(\lambda) = \mathcal{\lossSymbolOuter}(\approxResponse{\hyperFixed}{\responseParamFixed})}$.
This information removes the need to learn a model of the validation loss.
This function can also be evaluated stochastically by sampling points from the validation set.

Hyper-training has a benefit of learning hyperparameter to optimized weight mapping, which is substituted into the validation loss.
This often has a better inductive bias for learning hyperparameter to validation loss than directly learning the loss.
Also, the hypernetwork learns continuous best-responses, which may be a beneficial prior for finding weights by enforcing stability.

\subsection{Limitations of hypernetwork-based optimization}
We can apply this method to unconstrained continuous bi-level optimization problems with an inner loss function with inner parameters, and an outer loss function with outer parameters.
What sort of parameters can be optimized by our approach?
Hyperparameters typically fall into two broad categories:
1) Optimization hyperparameters, such as learning rates, which affect the choice of locally optimal point converged to, and
2) regularization or model architecture parameters which change the set of locally optimal points.
Hyper-training \emph{does not have inner optimization parameters} because there is no internal training loop, so we can not optimize these.
However, we must still choose optimization parameters for the fused optimization loop.
In principle, hyper-training can handle discrete hyperparameters, but does not offer particular advantages for optimization over continuous hyperparameters.

Another limitation is that our approach only proposes making local changes to the hyperparameters, and does not do uncertainty-based exploration.
Uncertainty can be incorporated into the hypernetwork by using stochastic variational inference as in \citet{blundell2015weight}, and we leave this for future work.
Finally, it is not obvious how to choose the training distribution of hyperparameters $p(\hyper)$.
If we do not sample a sufficient range of hyperparameters, the implicit estimated gradient of the validation loss w.r.t.\ the hyperparameters may be inaccurate.
We discuss several approaches to this problem in section~\ref{joint optimization}.

A clear difficulty of this approach is that hypernetworks can require several times as many parameters as the original model.
For example, training a fully-connected hypernetwork with 1 hidden layer of $H$ units to output $D$ parameters requires at least $D \times H$ hypernetwork parameters.
To address this problem, in section~\ref{joint optimization}, we propose an algorithm that only trains a linear model mapping hyperparameters to model weights.
\begin{figure*}[t]
\begin{minipage}{0.355\textwidth}
\begin{algorithm}[H]
\begin{algorithmic}
\captionof{algorithm}{Standard cross-validation with stochastic optimization\label{alg1}}
\FOR{$i = 1, \dots, \outerIter$}
	\STATE $\textnormal{initialize } \paramFixed$
	\STATE $\hyperFixed = \outerUpdate{\hyperFixed^{(1:i)}, \outerLoss{\paramFixed^{(1:i)}}} \vphantom{\curRename{\hyper}}$
	\LOOP
		\STATE $\variableData \sim \textnormal{Training data} \vphantom{\hat{a^{i}}}$
		\STATE $\paramFixed \mathrel{-}= \alpha \nabla_{\param} \lossTrainData{\param}{\hyper}$
	\ENDLOOP
	\STATE $\hyper^{i}, \param^{i} = \hyper, \param$
\ENDFOR
	\STATE 
	\STATE $i = \argmin_{i} \lossTrainData{\param^{(i)}}{\hyper^{(i)}}$
	\STATE 
\STATE Return $\hyperFixed^{(i)}, \param^{(i)}$
\end{algorithmic}
\end{algorithm}
\end{minipage}
\begin{minipage}{0.33\textwidth}
\begin{algorithm}[H]
	\begin{algorithmic}
	\captionof{algorithm}{Optimization of hypernetwork, then hyperparameters \label{algGlobal}}
	\STATE $\vphantom{i = 1, \dots, \outerIter}$
	\STATE $\textnormal{initialize } {\color{blue}\responseParamFixed}$
	\STATE ${\color{blue}\textrm{initialize } \curRename{\hyper}}$
	\LOOP
		\STATE $\variableData \sim \textnormal{Training data}${\color{blue}, $\hyperFixed \sim \hyperDist$}
		\STATE ${\color{blue}\responseParamFixed} \mathrel{-}= \alpha \nabla_{{\color{blue}\responseParam}}\lossTrainData{ \approxResponseSymbol{{\color{blue}\responseParam}} {\color{blue} ( \hyperFixed )}}{\hyper}$
	\ENDLOOP
	\STATE $\vphantom{\hyper^{i}, \param^{i} = \hyper, \param}$
	\LOOP
		\STATE ${\color{blue}\variableData \sim \textnormal{Validation data} \vphantom{\outerLoss{\paramFixed^{(i)}} < \outerLoss{\paramFixed}}}$
		\STATE ${\color{blue}\curRename{\hyperFixed} \mathrel{-}= \beta \nabla_{\curRename{\hyper}} \lossValidData{\approxResponse{\curRename{\hyper}}{\responseParamFixed}}}$
	\ENDLOOP
	\STATE Return $\curRename{\hyperFixed}, \approxResponseSymbol{{\color{blue}\responseParam}} {\color{blue} ( \curRename{\hyperFixed} )}$
	\end{algorithmic}
\end{algorithm}
\end{minipage}
\begin{minipage}{0.33\textwidth}
\begin{algorithm}[H]
	\begin{algorithmic}
	\captionof{algorithm}{Joint optimization of hypernetwork and hyperparameters\label{algLocal}}
	\STATE $\vphantom{i = 1, \dots, \outerIter}$
	\STATE $\textrm{initialize } \responseParamFixed$
	\STATE $\textrm{initialize } \curRename{\hyperFixed}$
	\LOOP
		\STATE $\variableData \sim \textnormal{Training data}$, $\sampleRename{\hyper} \sim p ( \sampleRename{\hyper} {\color{red}| \curRename{\hyper}} )$
		\STATE $\responseParamFixed \mathrel{-}= \alpha \nabla_{\responseParam} \lossTrainData{\approxResponse{\hyperFixed}{\responseParamFixed}}{\hyper}$
		\STATE 
		\STATE 
		\STATE $\variableData \sim \textnormal{Validation data}$
		\STATE $\curRename{\hyperFixed} \mathrel{-}= \beta \nabla_{\curRename{\hyper}} \lossValidData{\approxResponse{\curRename{\hyper}}{\responseParamFixed}}$
	\ENDLOOP
	\STATE Return $\curRename{\hyperFixed}, \approxResponse{\curRename{\hyperFixed}}{\responseParamFixed}$
	\end{algorithmic}
\end{algorithm}
\end{minipage}

\caption*{A comparison of standard hyperparameter optimization, our first algorithm, and our joint algorithm.
Here, $\outerUpdateSymbol$ refers to a generic hyperparameter optimization.
Instead of updating weights $\param$ using the loss gradient $\nicefrac{\partial \mathcal{L}(\param)}{\partial \param}$, we update hypernetwork weights $\responseParamFixed$ and hyperparameters $\hyper$ using the chain rule: $\frac{\partial \lossSymbolInner (\param_\responseParamFixed)}{\partial \param_\responseParamFixed} \frac{\partial \param_\responseParamFixed}{\partial \responseParamFixed}$ or $\frac{\partial \lossSymbolOuter (\param_\responseParamFixed (\hyper))}{\partial \param_\responseParamFixed (\hyper)} \frac{\partial \param_\responseParamFixed (\hyper) }{\partial \hyper}$ respectively.
This allows our method to use gradient-based hyperparameter optimization.
}
\label{compare with cv}
\end{figure*}

\subsection{Asymptotic convergence properties}
Algorithm~\ref{algGlobal} trains a hypernetwork using SGD, drawing hyperparameters from a fixed distribution $p(\lambda)$.
This section proves that Algorithm~\ref{algGlobal} converges to a local best-response under mild assumptions.
In particular, we show that, for a sufficiently large hypernetwork, the choice of $p(\lambda)$ does not matter as long as it has sufficient support.
Notation as if $\param_{\responseParam}$ has a unique solution for $\responseParam$ or $\param$ is used for simplicity, but is not true in general.
 
%
\begin{theorem}
\label{amoritizedExactness}
Sufficiently powerful hypernetworks can learn continuous best-response functions, which minimizes the expected loss for all hyperparameter distributions with convex support.

\begin{align*}
&\textnormal{There exists } \responseParam^{*}, \textnormal{ such that } \hyperDomain,\\
&\innerLoss{\param_{\responseParam^{*}} \left( \hyper \right)}{\hyper} = \min_{\param} \innerLoss{\param}{\hyper} \textnormal{ and}\\
&\responseParam^{*} = \argmin_{\argminTargetFix} \EhyperFix{\proofLossFix}
\end{align*}
\end{theorem}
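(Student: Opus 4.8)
The plan is to reduce the statement to two ingredients and then finish with a one-line ``pointwise optimality implies integral optimality'' argument. The two ingredients are: (i) there exists a \emph{continuous} best-response map $\hyper \mapsto \innerOptParam{\hyper}$ with $\innerLoss{\innerOptParam{\hyper}}{\hyper} = \min_{\param}\innerLoss{\param}{\hyper}$ for every $\hyper \in \hyperSupport$; and (ii) a ``sufficiently powerful'' hypernetwork can represent this map exactly, i.e.\ there is a parameter vector $\responseParam^{*}$ with $\approxResponse{\hyper}{\responseParam^{*}} = \innerOptParam{\hyper}$ for all $\hyper$ in the support. Granting (i) and (ii), the first displayed equation of the theorem holds by construction, and only the $\argmin$ claim is left to verify.

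For ingredient (i) I would lean on the simplifying assumption flagged just before the theorem: when $\innerLoss{\cdot}{\hyper}$ has a unique minimizer for each $\hyper$ (for instance when it is strictly or strongly convex in $\param$, as for an $\ell_2$-regularized loss), a standard argument (Berge's maximum theorem, or the implicit function theorem applied to the stationarity condition) shows that $\hyper \mapsto \innerOptParam{\hyper}$ is continuous. Convexity of $\hyperSupport$ makes the domain connected and, after restricting to compact subsets, lets us invoke the universal approximation property of the hypernetwork family to obtain ingredient (ii). More generally, without uniqueness one needs a continuous selection from the argmin correspondence, which is precisely where convexity of the support and convexity of the loss in $\param$ do the work.

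For the $\argmin$ claim, note that by (ii) and the definition of $\innerOptParamSymbol$ we have, for \emph{every} $\hyper \in \hyperSupport$ and every competing hypernetwork parameter $\responseParam$,
\begin{align*}
\innerLoss{\approxResponse{\hyper}{\responseParam^{*}}}{\hyper} \;=\; \min_{\param}\innerLoss{\param}{\hyper} \;\le\; \proofLoss .
\end{align*}
Since $p(\rename{\hyper})$ is supported inside $\hyperSupport$, taking expectations preserves the inequality,
\begin{align*}
\EhyperFix{\innerLoss{\approxResponse{\rename{\hyper}}{\responseParam^{*}}}{\rename{\hyper}}} \;\le\; \EhyperFix{\proofLossFix},
\end{align*}
for all $\responseParam$, which is exactly $\responseParam^{*} = \argmin_{\responseParam}\EhyperFix{\proofLossFix}$, read as a membership since the minimizer need not be unique, consistent with the caveat preceding the theorem. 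Because the pointwise inequality holds for every $\hyper \in \hyperSupport$, the same $\responseParam^{*}$ is simultaneously optimal for \emph{any} distribution whose support lies in $\hyperSupport$, which gives the ``for all hyperparameter distributions with convex support'' part of the claim.

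The main obstacle is entirely contained in ingredients (i)–(ii): in general the inner argmin is set-valued and may admit no continuous selection, so some regularity assumption on $\lossSymbolInner$ is unavoidable; and ``sufficiently powerful'' must be interpreted as \emph{exact} representability of a continuous best response, since plain universal approximation only yields $\varepsilon$-closeness, which would downgrade the equality to an approximate one and could prevent the expected-loss minimum from being attained at a finite $\responseParam$. Once (i) and (ii) are in hand, the passage from pointwise to expected loss is immediate.
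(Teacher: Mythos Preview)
Your argument is correct and shares the paper's two-step skeleton: first invoke universal approximation (together with continuity of the best response) to obtain $\responseParam^{*}$ with $\approxResponse{\hyper}{\responseParam^{*}} = \innerOptParam{\hyper}$ on the support, then pass from pointwise optimality to expected-loss optimality. The difference is in how the second step is justified. The paper casts it as Jensen's inequality applied to $\min_{\responseParam}$, and to set up Jensen it additionally imposes that $\hyperSupport$ be convex and that the regularizer be linear in $\hyper$, i.e.\ $\regLoss{\param}{\hyper} = \hyper \cdot \lossSymbol(\param)$, so that $\{\proofLoss : \hyper \in \hyperSupport\}$ forms a convex set of functions. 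Your direct ``pointwise $\le$ implies expected $\le$'' argument is the same inequality without that packaging: it uses only monotonicity of expectation, so it needs neither the linear-in-$\hyper$ structure on the loss nor convexity of the support for this step. Your route is therefore more elementary and slightly more general on the $\argmin$ claim; the paper's extra structural assumptions serve only to fit the conclusion into a Jensen template. Your caveat that ``sufficiently powerful'' must be read as \emph{exact} representability rather than $\varepsilon$-approximation is also well placed and matches how the paper implicitly uses the hypothesis.
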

\begin{proof}
If $\param_{\responseParam}$ is a universal approximator~\citep{hornik1991approximation} and the best-response is continuous in $\hyper$ (which allows approximation by $\param_{\responseParam}$), then there exists optimal hypernetwork parameters $\responseParam^{*}$ such that for all hyperparameters $\hyper$, $\param_{\responseParam^{*}} (\hyper) = \mathrm{argmin}_{\param} \innerLoss{\param}{\hyper}$.
Thus, $\innerLoss{\param_{\responseParam^{*}} \left( \hyper \right)}{\hyper} = \min_{\param} \innerLoss{\param}{\hyper}$.
In other words, universal approximator hypernetworks can learn continuous best-responses.

Substituting $\responseParam^{*}$ into the training loss gives ${\Ehyper{\innerLoss{\approxResponse{\hyper}{\responseParam^{*}}}{\hyper}} = \Ehyper{\min_{\argminTargetFix} \proofLoss}}$.
By Jensen's inequality, $\min_{\argminTargetFix} \Ehyper{\proofLoss} \geq \Ehyper{\min_{\argminTargetFix} \proofLoss}$.
To satisfy Jensen's requirements, we have $\min_{\argminTargetFix}$ as our convex function on the convex vector space of functions $\{\proofLoss \textnormal{for } \hyper \in \hyperSupport \}$.
To guarantee convexity of the vector space we require that $\hyperSupport$ is convex and $\innerLoss{\param}{\hyper} = \innerLossEExpand{\param}{\hyper}$ with $\regLoss{\param}{\hyper} = \hyper \cdot \lossSymbol ( \param)$.
Thus, $\responseParam^{*} = \mathrm{argmin}_{\argminTargetFix} \Ehyper{\proofLoss}$.
In other words, if the hypernetwork learns the best-response it will simultaneously minimize the loss for every point in $\hyperSupport$.
\end{proof}
Thus, having a universal approximator and a continuous best-response implies $\hyperDomain$, $\proofTargetLossOutput = \targetLoss$, because $\approxResponseOutput{\hyper} = \innerOptParam{\hyper}$.
Thus, under mild conditions, we will learn a best-response in the support of the hyperparameter distribution.
If the best-response is differentiable, then there is a neighborhood about each hyperparameter where the best-response is approximately linear.
If the support of the hyperparameter distribution is this neighborhood, then we can learn the best-response locally with linear regression.

In practice, there are no guarantees about the network being a universal approximator or the finite-time convergence of optimization.
The optimal hypernetwork will depend on the hyperparameter distribution $p(\hyper)$, not just the support of this distribution.
We appeal to experimental results that our method is feasible in practice.

\begin{figure}
\vspace{-0.25cm}
\begin{algorithm}[H]
\begin{algorithmic}
	\captionof{algorithm}{Simplified joint optimization of hypernetwork and hyperparameters\label{algMin}}
	\STATE $\textrm{initialize } \responseParamFixed, \curRename{\hyperFixed}$
	\LOOP
		\STATE $\variableData \sim \textnormal{Training data},$ $\variableData' \sim \textnormal{Validation data}$
		\STATE $\responseParamFixed \mathrel{-}= \alpha \nabla_{\responseParam} \lossTrainData{\approxResponse{{\color{blue}  \curRename{\hyperFixed}}}{\responseParamFixed}}{{\color{blue}  \curRename{\hyperFixed}}}$
		\STATE $\curRename{\hyperFixed} \mathrel{-}= \beta \nabla_{\curRename{\hyper}} \lossValidDataChange{\approxResponse{\curRename{\hyper}}{\responseParamFixed}}{\variableData'}$
	\ENDLOOP
	\STATE Return $\curRename{\hyperFixed}, \approxResponse{\curRename{\hyperFixed}}{\responseParamFixed}$
	\end{algorithmic}
	\end{algorithm}
\caption*{Algorithm~\ref{algMin} builds on Algorithm~\ref{algLocal} by using gradient updates on $\curRename{\hyper}$ as a source of noise.
This variant does not have asymptotic guarantees, but performs similarly to Algorithm~\ref{algLocal} in practice.
}
\vspace{-1cm}
\end{figure}

\subsection{Jointly training parameters and hyperparameters}
\label{joint optimization}

\begin{figure}[ht]
\centering
\includegraphics[width=0.4\textwidth]{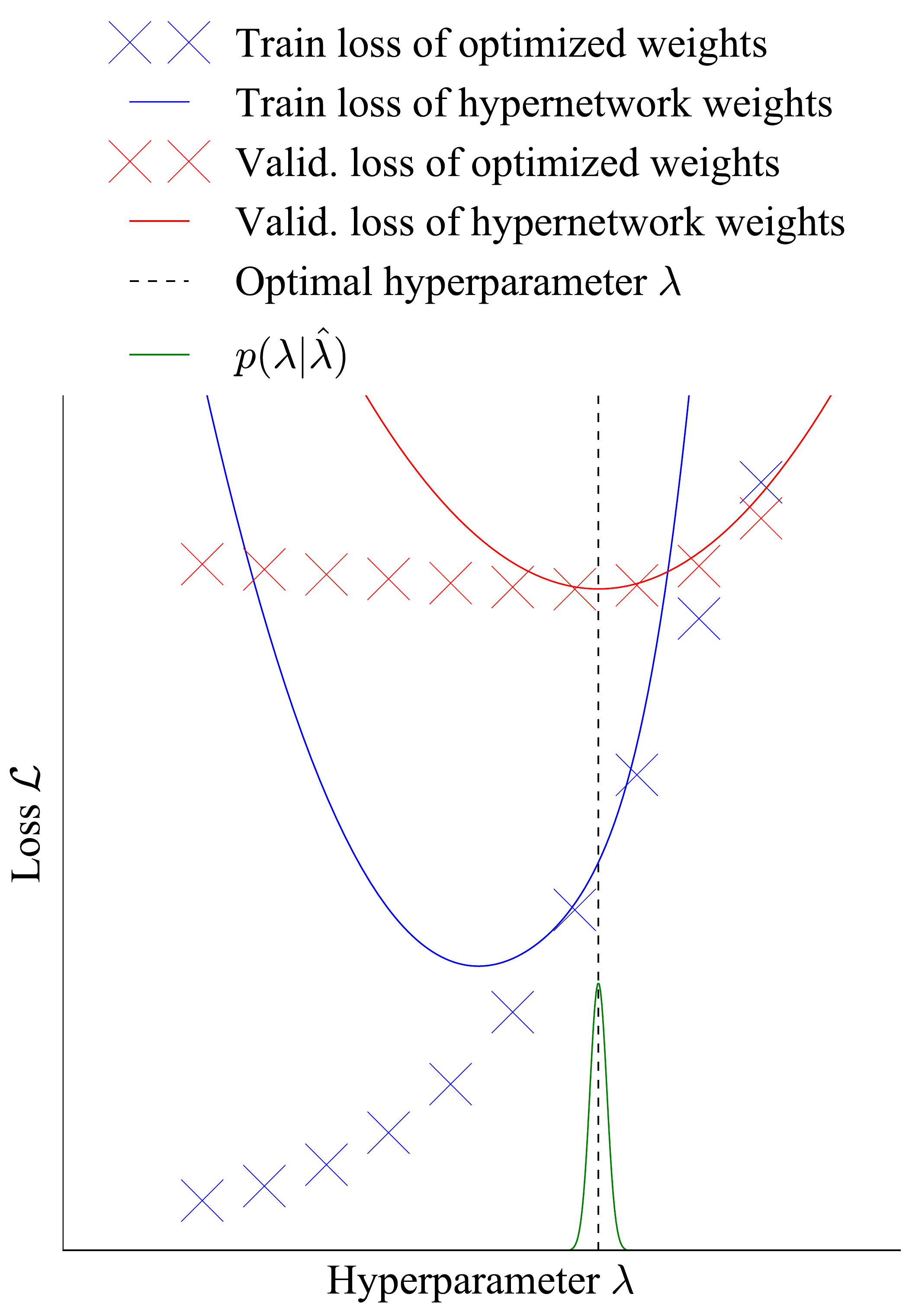}
\caption{
Training and validation losses of a neural network, estimated by cross-validation (crosses) or a linear hypernetwork (lines).
The hypernetwork's limited capacity makes it only accurate where  the hyperparameter distribution puts mass.
	\label{fig:exp2}}
\end{figure}

Theorem~\ref{amoritizedExactness} holds for any $\hyperDist$.
In practice, we should choose a $\hyperDist$ that puts most of its mass on promising hyperparameter values, because it may not be possible to learn a best-response for all hyperparameters due to limited hypernetwork capacity.
Thus, we propose Algorithm~\ref{algLocal}, which only tries to match a best-response locally.
We introduce a ``current'' hyperparameter $\curRename{\hyperFixed}$, which is updated each iteration.
We define a conditional hyperparameter distribution, $\hyperDistVar$, which only puts mass close to $\curRename{\hyperFixed}$.

Algorithm~\ref{algLocal} combines the two phases of Algorithm~\ref{algGlobal} into one.
Instead of first learning a hypernetwork that can output weights for any hyperparameter then optimizing the hyperparameters, Algorithm~\ref{algLocal} only samples hyperparameters near the current guess.
This means the hypernetwork just has to be trained to estimate good enough weights for a small set of hyperparameters.
There is an extra cost of having to re-train the hypernetwork each time we update $\curRename{\hyperFixed}$.
The locally-trained hypernetwork can then be used to provide gradients to update the hyperparameters based on validation set performance.

How simple can we make the hypernetwork, and still obtain useful gradients to optimize hyperparameters?
Consider the case in our experiments where the hypernetwork is a linear function of the hyperparameters and the conditional hyperparameter distribution is $\hyperDistVar = \mathcal{N} ( \curRename{\hyper}, \sigma \mathbb{I} )$ for some small $\sigma$.
This hypernetwork learns a tangent hyperplane to a best-response function and only needs to make minor adjustments at each step if the hyperparameter updates are sufficiently small.
We can further restrict the capacity of a linear hypernetwork by factorizing its weights, effectively adding a bottleneck layer with a linear activation and a small number of hidden units.

\section{Related Work}
Our work is complementary to the SMASH algorithm of \citet{brock2017smash}, with section 2 discussing our differences.

\paragraph{Model-free approaches}
Model-free approaches use only trial-and-error to explore the hyperparameter space.
Simple model-free approaches applied to hyperparameter optimization include grid search and random search~\citep{bergstra2012random}.
Hyperband~\citep{li2016hyperband} combines bandit approaches with modeling the learning procedure.

\paragraph{Model-based approaches}
Model-based approaches try to build a surrogate function, which can allow gradient-based optimization or active learning.
A common example is Bayesian optimization.
Freeze-thaw Bayesian optimization can condition on partially-optimized model performance.

\paragraph{Optimization-based approaches}
Another line of related work attempts to directly approximate gradients of the validation loss with respect to hyperparameters.
\citet{domke2012generic} proposes to differentiate through unrolled optimization to approximate best-responses in nested optimization and \citet{maclaurin2015gradient} differentiate through entire unrolled learning procedures.
DrMAD~\citep{fu2016drmad} approximates differentiating through an unrolled learning procedure to relax memory requirements for deep neural networks.
HOAG~\citep{pedregosa2016hyperparameter} finds hyperparameter gradients with implicit differentiation by deriving an implicit equation for the gradient with optimality conditions.
\citet{franceschi2017forward} study forward and reverse-mode differentiation for constructing hyperparameter gradients.
Also, \citet{feng2017gradient} establish conditions where the validation loss of best-responding weights are almost everywhere smooth, allowing gradient-based training of hyperparameters.

A closely-related procedure to our method is the $T1 - T2$ method of \citet{luketina2016scalable}, which also provides an algorithm for stochastic gradient-based optimization of hyperparameters.
The convergence of their procedure to local optima of the validation loss depends on approximating the Hessian of the training loss for parameters with the identity matrix.
In contrast, the convergence of our method depends on having a suitably powerful hypernetwork.

\paragraph{Game theory}
Best-response functions are extensively studied as a solution concept in discrete and continuous multi-agent games (e.g., \citet{fudenberg1998theory}).
Games where learning a best-response can be applied include adversarial training~\citep{goodfellow2014generative}, or Stackelberg competitions (e.g., \citet{bruckner2011stackelberg}).
For adversarial training, the analog of our method is a discriminator who observes the generator's parameters.

\newcommand{\imSize}{28} 
\newcommand{\numClass}{10} 
\newcommand{\numModelParams}{7,850} 
\newcommand{\regressionType}{linear } 
\newcommand{\numValidSmall}{10,000}
\newcommand{\numTestSmall}{10,000}
\newcommand{\numTrainSmall}{10} 
\newcommand{\hyperDimSmall}{1} 
\newcommand{\realIter}{1,000} 
\newcommand{\stepSizeReal}{0.0001} 
\newcommand{\numHiddenGlobal}{50} 
\newcommand{\batchSizeGlobal}{2} 
\newcommand{\stepSizeHypernetGlobal}{0.0001} 
\newcommand{\hyperMeanGlobal}{0} 
\newcommand{\hyperVarGlobal}{1.5} 
\newcommand{\numHypernetParamsGlobal}{400,450} 
\newcommand{\batchSizeLocal}{2} 
\newcommand{\stepSizeHypernetLocal}{\stepSizeHypernetGlobal} 
\newcommand{\hyperVarLocal}{0.00001} 
\newcommand{\numHypernetParamsLocal}{15,700} 
\newcommand{\numHypernetItersSmall}{10} 
\newcommand{\stepSizeHyperSmall}{0.1} 
\newcommand{\numHyperItersSmall}{1} 
\newcommand{\hyperDimMedium}{10} 
\newcommand{\numTrainMedium}{50,000}
\newcommand{\minibatchMedium}{100}
\newcommand{\numValidMedium}{\numValidSmall}
\newcommand{\numTestMedium}{\numTestSmall}
\newcommand{\numHiddenMedium}{1} 
\newcommand{\batchSizeMedium}{\batchSizeGlobal} 
\newcommand{\stepSizeHypernetMedium}{\stepSizeHypernetGlobal} 
\newcommand{\hyperVarMedium}{\hyperVarLocal} 
\newcommand{\numHypernetParamsMedium}{86,350} 
\newcommand{\numHypernetItersMedium}{10} 
\newcommand{\numHyperItersMedium}{1} 
\newcommand{\stepSizeHyperMedium}{0.0001} 
\newcommand{\hyperDimLarge}{\numModelParams} 
\newcommand{\minibatchLarge}{\minibatchMedium}
\newcommand{\numTrainLarge}{\numValidMedium}
\newcommand{\numValidLarge}{\numValidMedium}
\newcommand{\numTestLarge}{\numTestMedium}
\newcommand{\numHiddenLarge}{10} 
\newcommand{\batchSizeLarge}{\batchSizeMedium} 
\newcommand{\stepSizeHypernetLarge}{\stepSizeHypernetMedium} 
\newcommand{\hyperVarLarge}{\hyperVarMedium} 
\newcommand{\numHypernetParamsLarge}{164,860} 
\newcommand{\numHypernetItersLarge}{\numHypernetItersMedium} 
\newcommand{\stepSizeHyperLarge}{\numHypernetItersMedium} 
\newcommand{\numTrainVs}{25}
\newcommand{\numValidVs}{10,215}
\newcommand{\numHiddenLossVs}{\numModelParams}

\section{Experiments}
\label{sec.experiments}

\begin{figure}[ht!]
\centering
\includegraphics[width=0.5\textwidth]{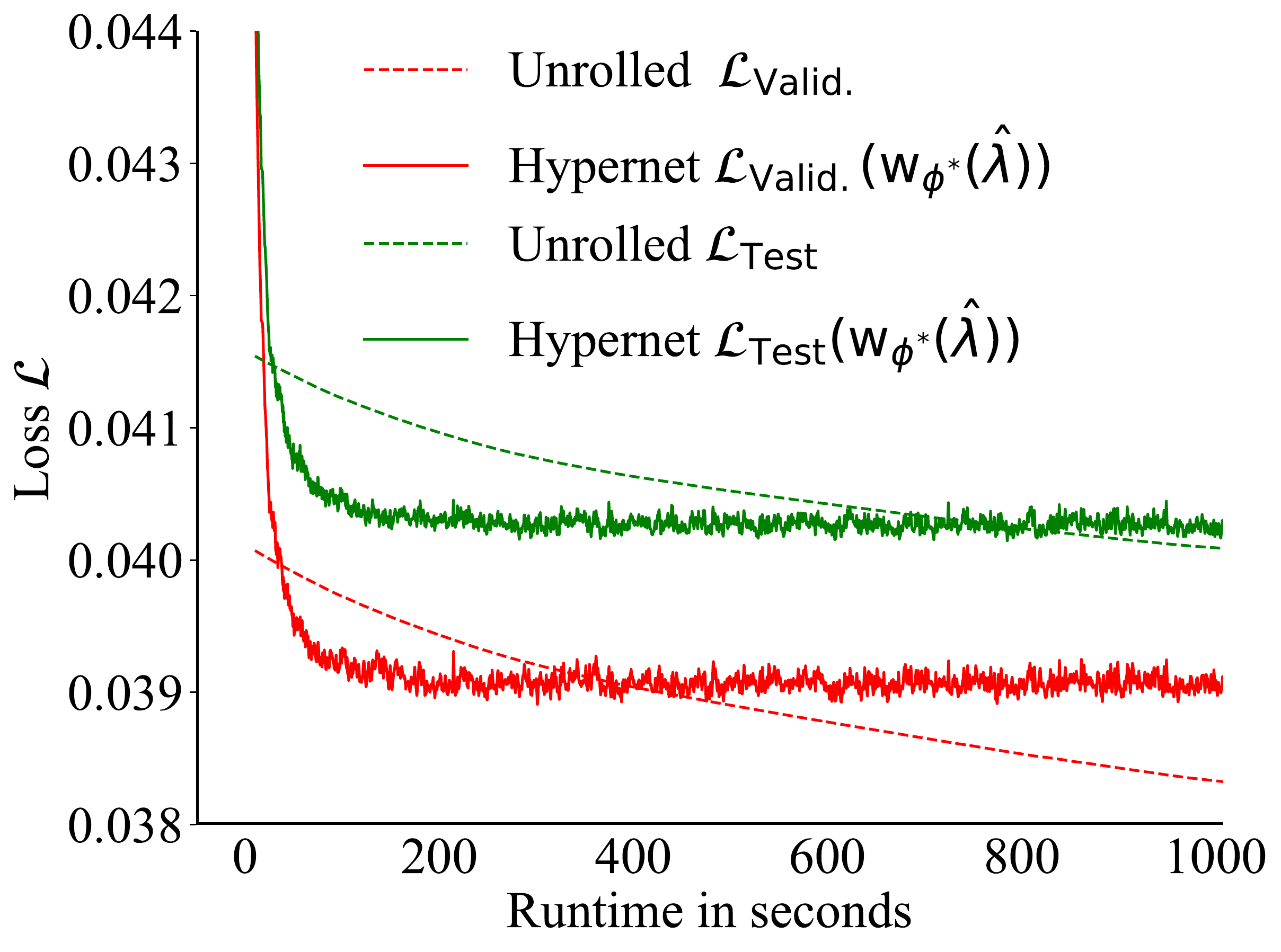}
\includegraphics[width=0.5\textwidth]{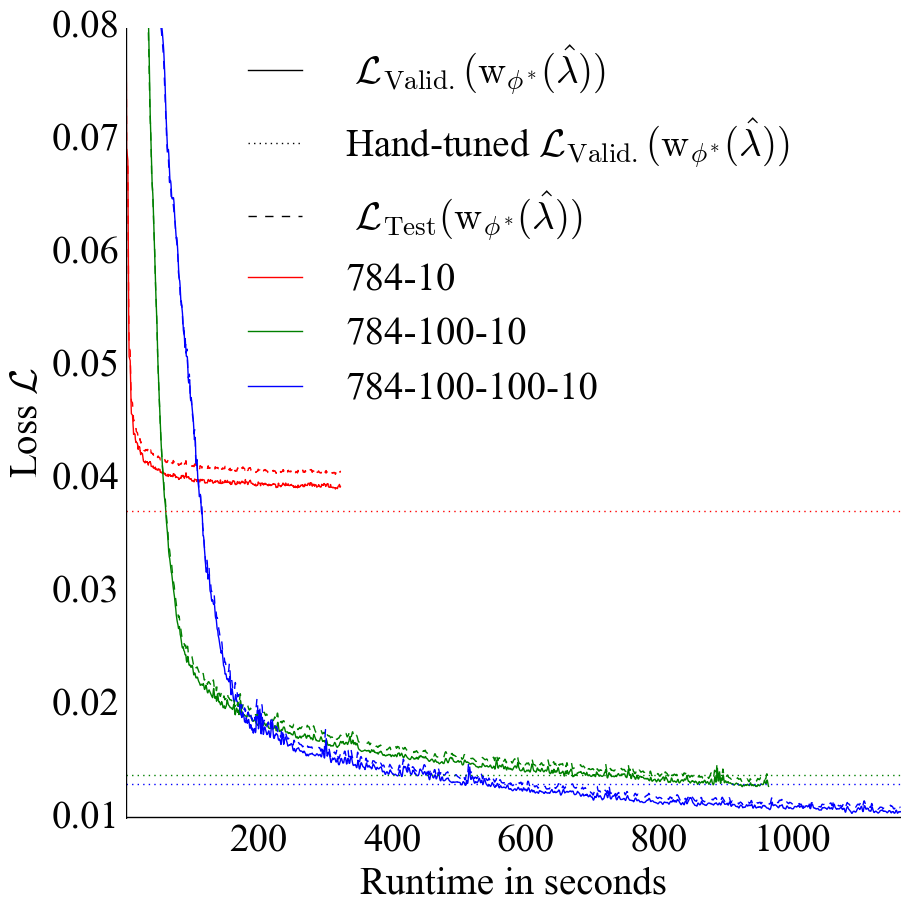}
\caption{
Validation and test losses during hyperparameter optimization with a separate $L_{2}$ weight decay applied to each weight in the model.
Thus, models with more parameters have more hyperparameters.
\emph{Top:} We solve the $7,850$-dimensional hyperparameter optimization problem with a linear model and multiple algorithms.
Hypernetwork-based optimization converges to a sub-optimal solution faster than unrolled optimization from \citet{maclaurin2015gradient}.
\emph{Bottom:} Hyper-training is applied different layer configurations in the model.
\label{fig:exp3}
}
\end{figure}
In our experiments, we examine the standard example of stochastic gradient-based optimization of neural networks, with a weight regularization penalty.
Some gradient-based methods explicitly use the gradient of a loss, while others use the gradient of a learned surrogate loss.
Hyper-training learns and substitutes a surrogate best-response function into a real loss.
We may contrast our algorithm with methods learning the loss like Bayesian optimization, gradient-based methods only handling hyperparameters that affect the training loss and gradient-based methods which can handle optimization parameters.
The best comparison for hyper-training is to gradient-based methods which only handle parameters affecting the training loss because other methods apply to a more general set of problems.
In this case, we write the training and validation losses as:

\begin{align*}
\innerLoss{\param}{\hyper} &= \innerLossEExpand{\param}{\hyper} \\
\outerLoss{\param} &= \outerLossEExpand{\param}
\end{align*}

In all experiments, Algorithms~\ref{algGlobal} or \ref{algLocal} are used to optimize weights with a mean squared error on MNIST~\citep{lecun1998gradient} with $\lossSymbol_{\mathrm{Reg}}$ as an $L_{2}$ weight decay penalty weighted by $\exp(\hyper)$.
The elementary model has $\numModelParams$ weights.
All hidden units in the hypernetwork have a ReLU activation~\citep{nair2010rectified} unless otherwise specified.
Autograd~\citep{maclaurin2015autograd} was used to compute all derivatives.
For each experiment, the minibatch samples $\batchSizeGlobal$ pairs of hyperparameters and up to $1,000$ training data points.
We used Adam for training the hypernetwork and hyperparameters, with a step size of $\stepSizeHypernetGlobal$.
We ran all experiments on a CPU.

\subsection{Learning a global best-response}
Our first experiment, shown in Figure~\ref{fig:exp1}, demonstrates learning a global approximation to a best-response function using Algorithm~\ref{algGlobal}.
To make visualization of the regularization loss easier, we use $\numTrainSmall$ training data points to exacerbate overfitting.
We compare the performance of weights output by the hypernetwork to those trained by standard cross-validation (Algorithm~\ref{alg1}).
Thus, elementary weights were randomly initialized for each hyperparameter choice and optimized using Adam~\citep{kingma2014adam} for $\realIter$ iterations with a step size of $\stepSizeReal$.

When training the hypernetwork, hyperparameters were sampled from a broad Gaussian distribution: $\hyperDist = \mathcal{N} ( \hyperMeanGlobal, \hyperVarGlobal )$.
The hypernetwork has $\numHiddenGlobal$ hidden units which results in $\numHypernetParamsGlobal$ parameters of the hypernetwork.

The minimum of the best-response in Figure~\ref{fig:exp1} is close to the real minimum of the validation loss, which shows a hypernetwork can satisfactorily approximate a global best-response function in small problems.

\subsection{Learning a local best-response}
Figure~\ref{fig:exp2} shows the same experiment, but using the Algorithm~\ref{algLocal}.
The fused updates result in finding a best-response approximation whose minimum is the actual minimum faster than the prior experiment.
The conditional hyperparameter distribution is given by $\hyperDistVar = \mathcal{N} ( \curRename{\hyperFixed}, 0.00001)$.
The hypernetwork is a linear model, with only $\numHypernetParamsLocal$ weights.
We use the same optimizer as the global best-response to update both the hypernetwork and the hyperparameters.

Again, the minimum of the best-response at the end of training minimizes the validation loss.
This experiment shows that using only a locally-trained linear best-response function can give sufficient gradient information to optimize hyperparameters on a small problem.
Algorithm~\ref{algLocal} is also less computationally expensive than Algorithms~\ref{alg1} or \ref{algGlobal}.

\begin{figure}[t!]
	\hspace{-0.02\textwidth}
	\begin{tabular}{@{\hskip3pt}c @{\hskip3pt}c @{\hskip3pt}c @{\hskip3pt}c} 
	&GP mean&Hyper-training fixed&Hyper-training\\
	\rotatebox{90}{\,\,\,\,\,\,\,\,\,\,\,\, Inferred loss}\hspace{-0.02\textwidth}&\includegraphics[width=0.14\textwidth]{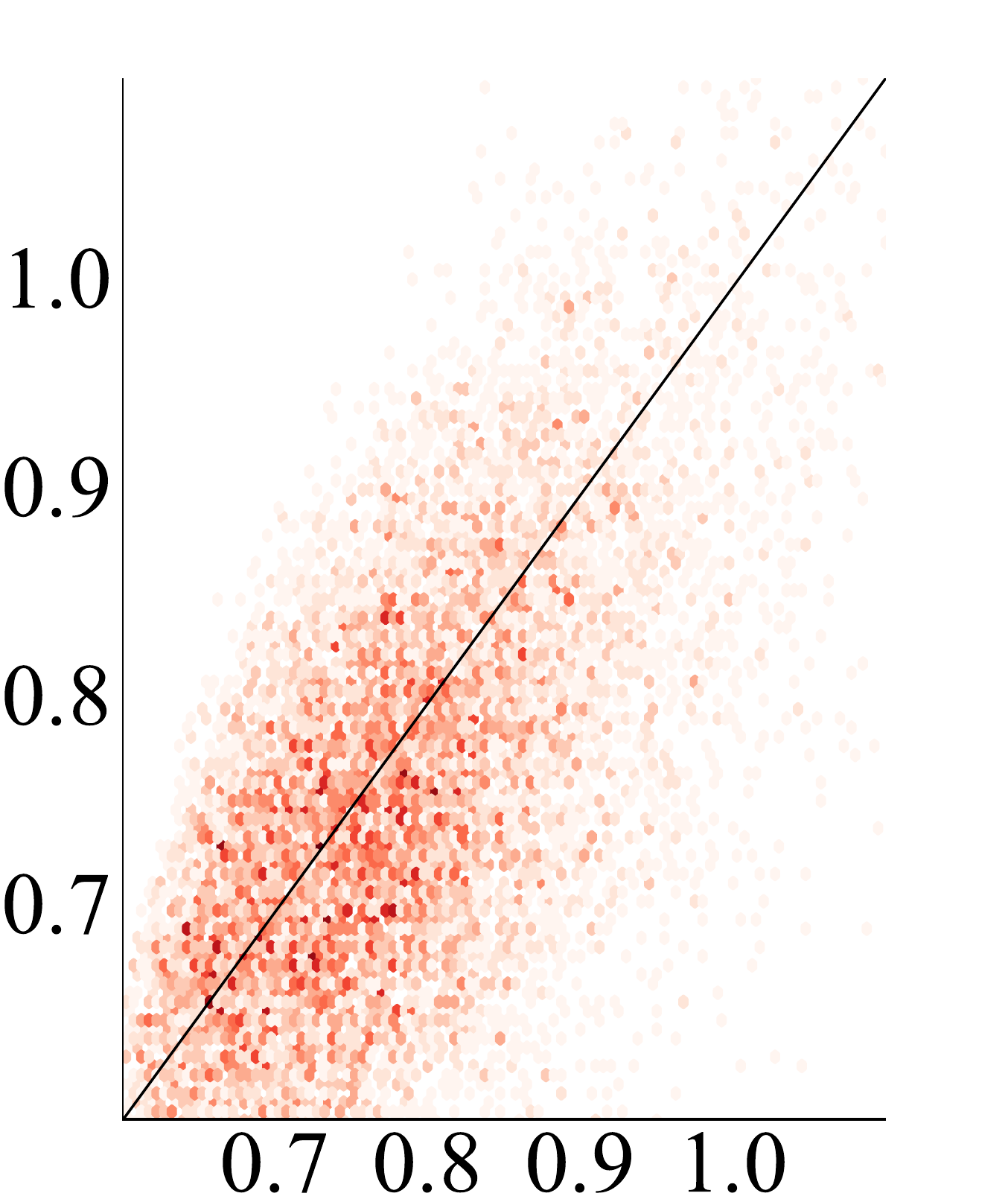}&\includegraphics[width=0.14\textwidth]{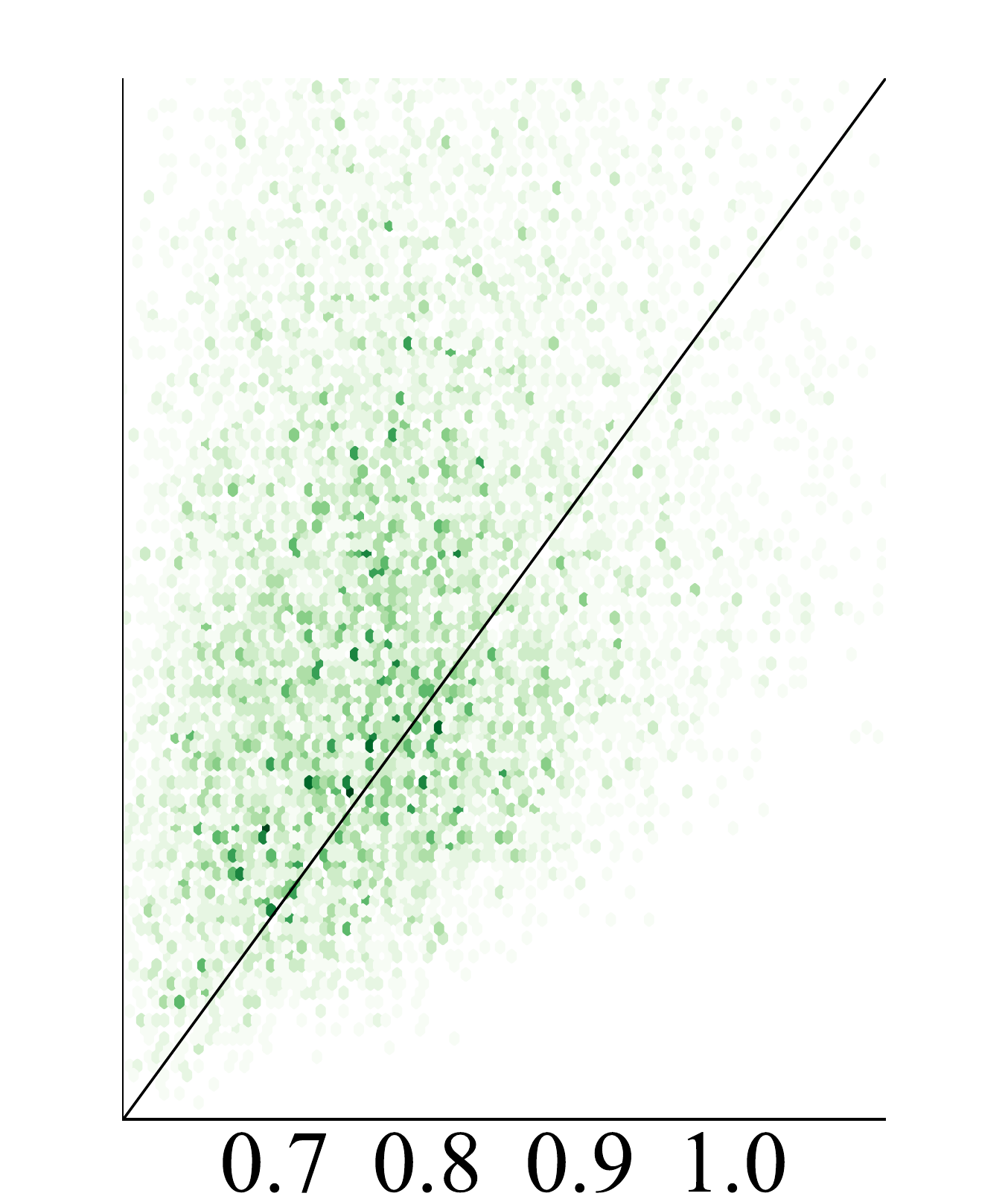}&\includegraphics[width=0.14\textwidth]{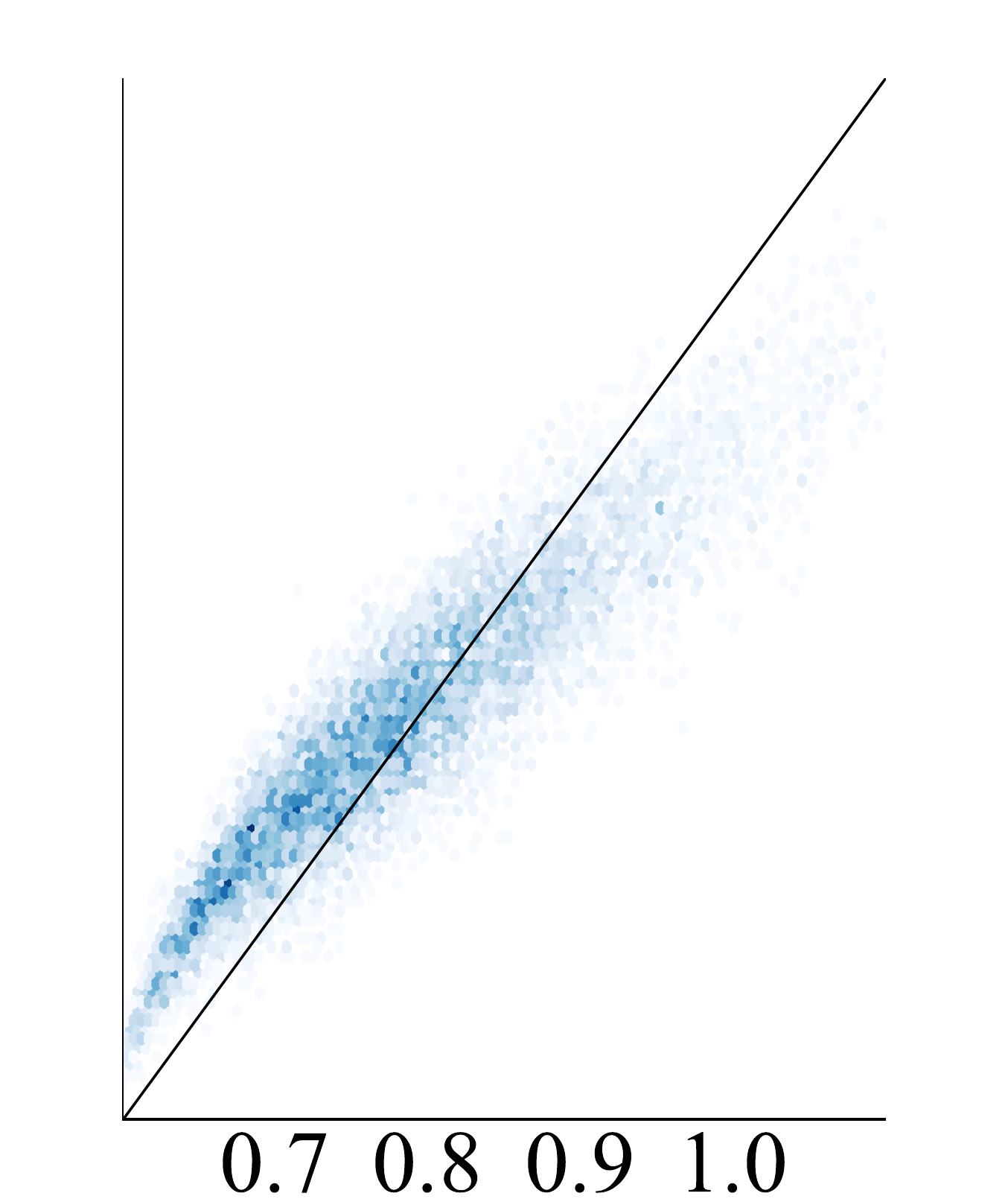}\\
	&&True loss&\\
	\rotatebox{90}{\,\,\,\,\,\,\,\,\,\,\,\,\,\,\, Frequency}&\includegraphics[width=0.14\textwidth]{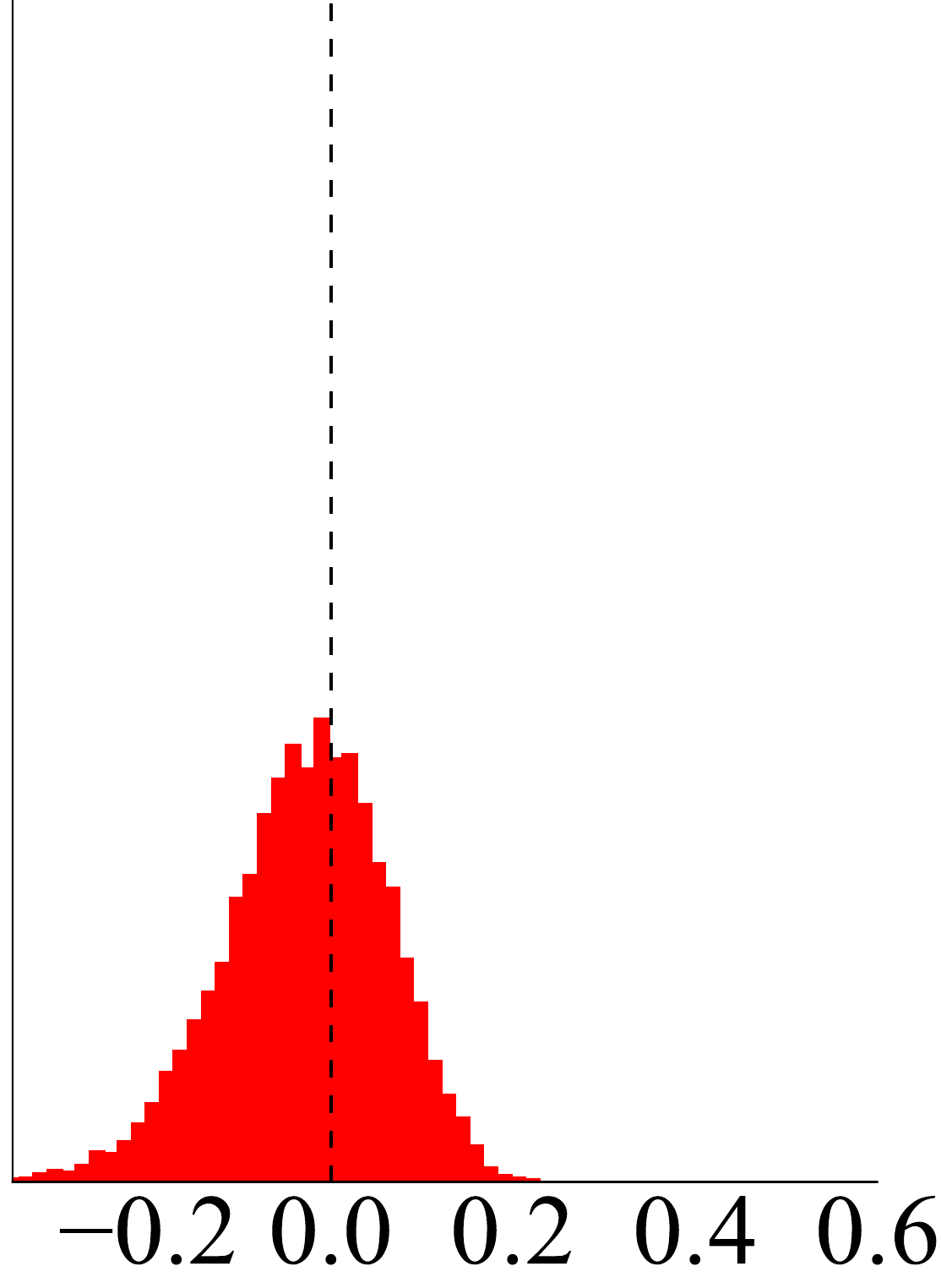}&\includegraphics[width=0.14\textwidth]{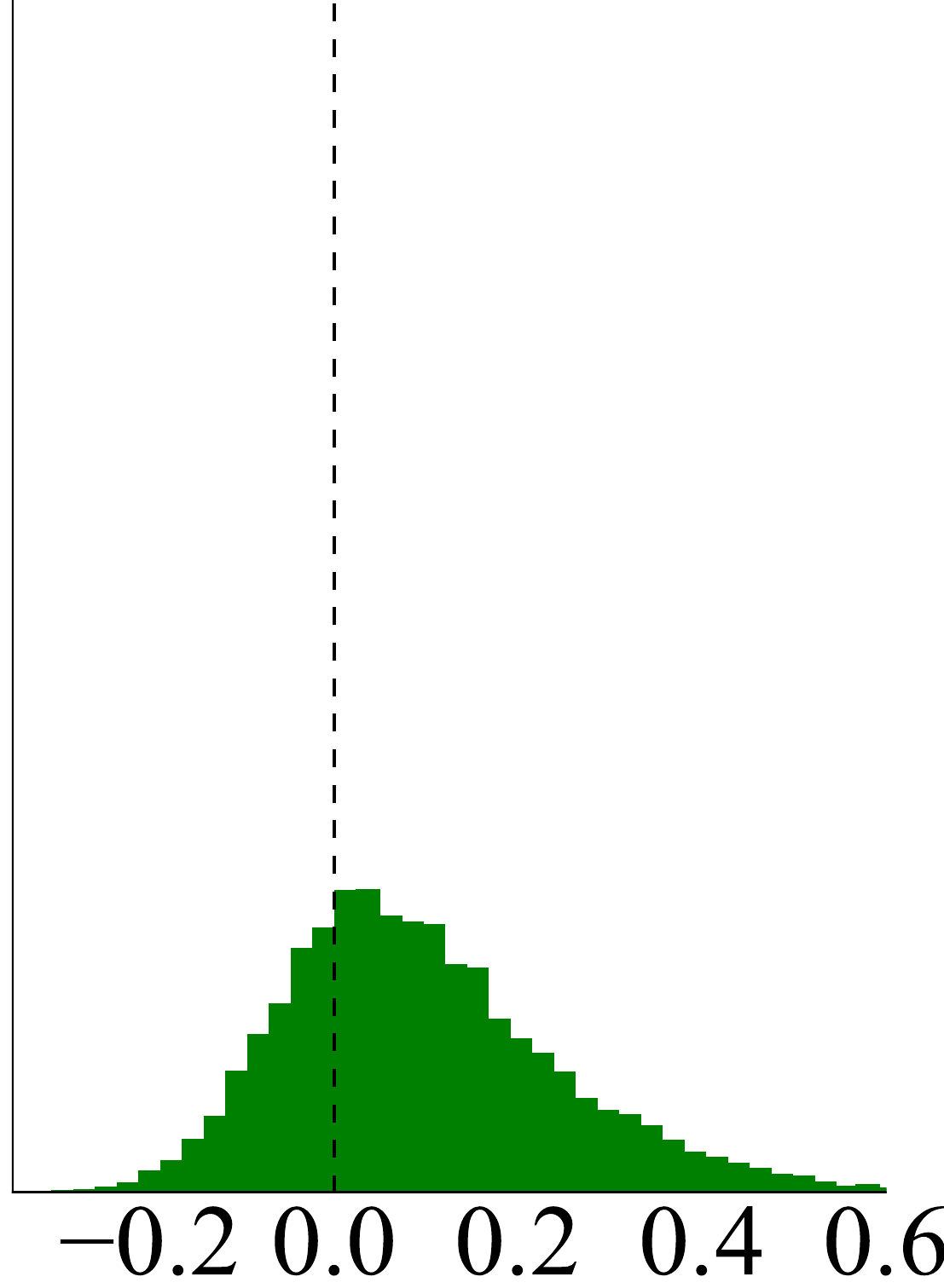}&\includegraphics[width=0.14\textwidth]{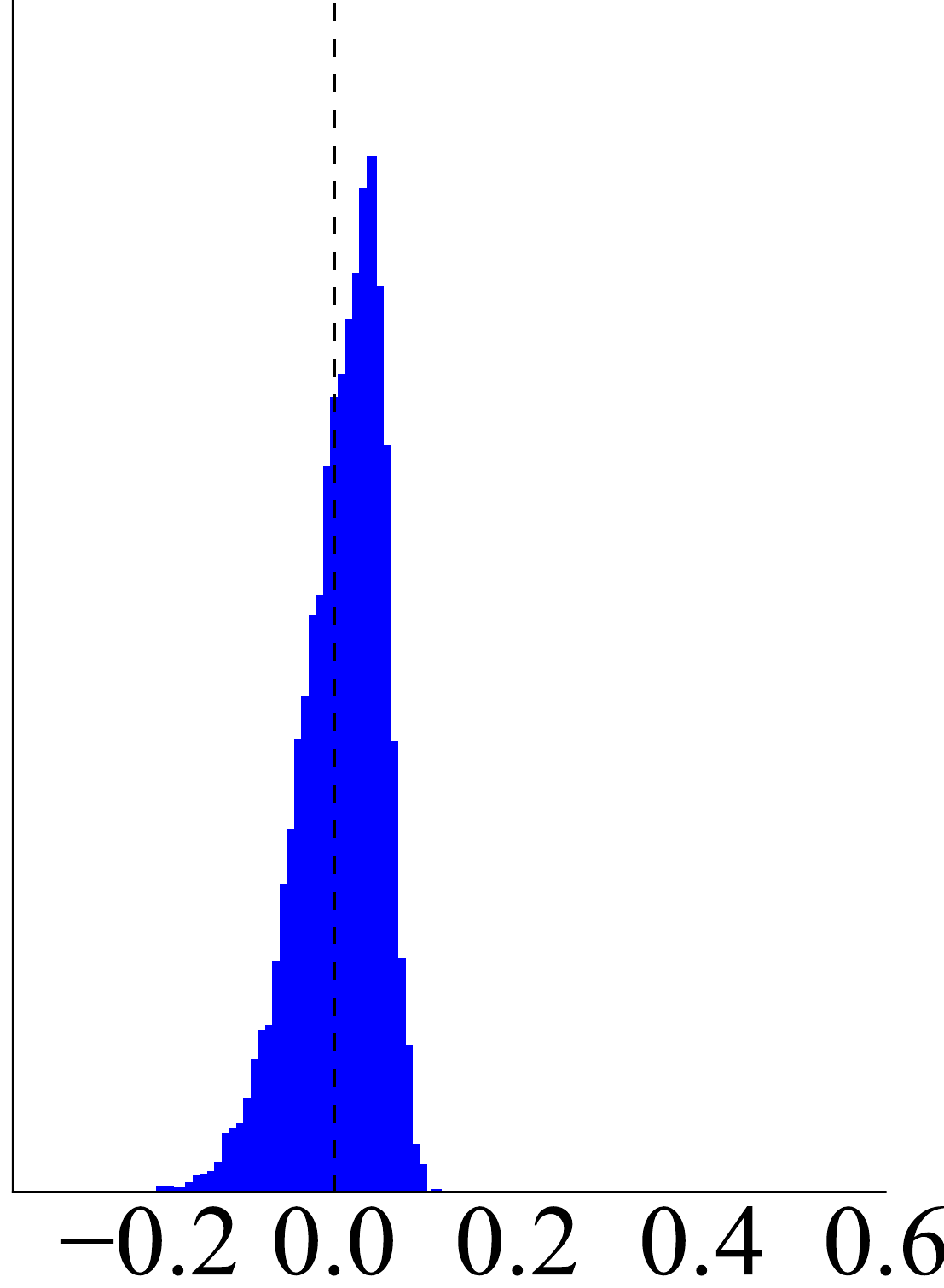}\\
	&&Inferred - true loss&\\
	\end{tabular}
	\caption{
	Comparing three approaches to inferring validation loss.
	\emph{First column:}
	{\color{red}A Gaussian process}, fit on $\numTrainVs$ hyperparameters and the corresponding validation losses.
	\emph{Second column:}
	{\color{green}A hypernetwork, fit on the same $\numTrainVs$ hyperparameters} and the corresponding optimized weights.
	\emph{Third column:}
	Our proposed method, {\color{blue}a hypernetwork trained with stochastically sampled hyperparameters}.
	\emph{Top row:}
	The distribution of inferred and true losses.
	The diagonal black line is where predicted loss equals true loss.
	\emph{Bottom row:}
	The distribution of differences between inferred and true losses.
	The Gaussian process often under-predicts the true loss, while the hypernetwork trained on the same data tends to over-predict the true loss.
	\label{fig:exp5}
	}
\end{figure}

\subsection{Hyper-training and unrolled optimization}
To compare hyper-training with other gradient-based hyperparameter optimization methods, we train models with $\hyperDimLarge$ hyperparameters and a separate $L_{2}$ weight decay applied to each weight in a 1 layer (linear) model.
The conditional hyperparameter distribution and optimizer for the hypernetwork and hyperparameters is the same the prior experiment.
We factorize the weights for the model by selecting a hypernetwork with $\numHiddenLarge$ hidden units.
The factorized linear hypernetwork has $\numHiddenLarge$ hidden units giving $\numHypernetParamsLarge$ weights.
Each hypernetwork iteration is $2 \cdot \numHiddenLarge$ times as expensive as an iteration on just the model because there is the same number of hyperparameters as model parameters.

Figure~\ref{fig:exp3}, top, shows that Algorithm~\ref{algLocal} converges more quickly than the unrolled reverse-mode optimization introduced in \citet{maclaurin2015gradient} and implemented by \citet{franceschi2017forward}.
Hyper-training reaches sub-optimal solutions because of limitations on how many hyperparameters can be sampled for each update but overfits validation data less than unrolling.
Standard Bayesian optimization cannot be scaled to this many hyperparameters.
Thus, this experiment shows Algorithm~\ref{algLocal} can efficiently partially optimize thousands of hyperparameters.
It may be useful to combine these methods by using a hypernetwork to output initial parameters and then unrolling several steps of optimization to differentiate through.

\subsection{Optimizing with deeper networks}
To see if we can optimize deeper networks with hyper-training we optimize models with 1, 2, and 3 layers and a separate $L_{2}$ weight decay applied to each weight.
The conditional hyperparameter distribution and optimizer for the hypernetwork and hyperparameters is the same the prior experiment.
We factorize the weights for each model by selecting a hypernetwork with $\numHiddenLarge$ hidden units.

Figure~\ref{fig:exp3}, bottom, shows that Algorithm~\ref{algLocal} can scale to networks with multiple hidden layers and outperform hand-tuned settings.
As we add more layers the difference between validation loss and testing loss decreases, and the model performs better on the validation set.
Future work should compare other architectures like recurrent or convolutional networks.
Additionally, note that more layers perform with lesser training (not shown), validation, and test losses, instead of lower training loss and higher validation or test loss.
This performance indicates that using weight decay on each weight could be a prior for generalization, or that hyper-training enforces another useful prior like the continuity of a best-response.
%
%
%

\subsection{Estimating weights versus estimating loss}
Our approach differs from Bayesian optimization which attempts to directly model the validation loss of optimized weights, where we try to learn to predict optimal weights.
In this experiment, we untangle the reason for the better performance of our method:
Is it because of a better inductive bias, or because our way can see more hyperparameter settings during optimization?

First, we constructed a hyper-training set: We optimized $\numTrainVs$ sets of weights to completion, given randomly-sampled hyperparameters.
We chose $\numTrainVs$ samples since that is the regime in which we expect Gaussian process-based approaches to have the most significant advantage.
We also constructed a validation set of $\numValidVs$ (optimized weight, hyperparameter) tuples generated in the same manner.
{\color{red}We then fit a Gaussian process (GP)} regression model with an RBF kernel from sklearn on the validation loss data.
{\color{green}A hypernetwork is fit to the same set of hyperparameters and data}.
Finally, {\color{blue}we optimize another hypernetwork using Algorithm~\ref{algGlobal}}, for the same amount of time as building the GP training set.
The two hypernetworks were linear models and trained with the same optimizer parameters as the $\hyperDimLarge$-dimensional hyperparameter optimization.

Figure~\ref{fig:exp5} shows the distribution of prediction errors of these three models.
We can see that the Gaussian process tends to underestimate loss.
The hypernetwork trained with the same small fixed set of examples tends to overestimate loss.
We conjecture that this is due to the hypernetwork producing bad weights in regions where it doesn't have enough training data.
Because the hypernetwork must provide actual weights to predict the validation loss, poorly-fit regions will overestimate the validation loss.
Finally, the hypernetwork trained with Algorithm~\ref{algGlobal} produces errors tightly centered around 0.
The main takeaway from this experiment is a hypernetwork can learn more accurate surrogate functions than a GP for equal compute budgets because it views (noisy) evaluations of more points.
%



\section{Conclusions and Future Work}
In this paper, we addressed the question of tuning hyperparameters using gradient-based optimization, by replacing the training optimization loop with a differentiable hypernetwork.
We gave a theoretical justification that sufficiently large networks will learn the best-response for all hyperparameters viewed in training.
We also presented a simpler and more scalable method that jointly optimizes both hyperparameters and hypernetwork weights, allowing our method to work with manageably-sized hypernetworks.

Experimentally, we showed that hypernetworks could provide a better inductive bias for hyperparameter optimization than Gaussian processes fitting the validation loss empirically.

There are many directions to extend the proposed methods.
For instance, the hypernetwork could be composed with several iterations of optimization, as an easily-differentiable fine-tuning step.
Or, hypernetworks could be incorporated into meta-learning schemes, such as MAML~\citep{finn2017model}, which finds weights that perform a variety of tasks after unrolling gradient descent.

We also note that the prospect of optimizing thousands of hyperparameters raises the question of \emph{hyper-regularization}, or regularization of hyperparameters.
%


\subsubsection*{Acknowledgments}
We thank Matthew MacKay, Dougal Maclaurin, Daniel Flam-Shepard, Daniel Roy, and Jack Klys for helpful discussions.

\bibliography{main_bib}
\bibliographystyle{icml2018}
\end{document}